\documentclass[11pt]{article}
\usepackage{epsfig,makeidx,amsmath,amsfonts,latexsym,subfigure}
\usepackage{times,pifont}
\usepackage[utf8]{inputenc}
\usepackage{booktabs}       
\usepackage{graphicx}
\usepackage{bm}
\usepackage{xcolor}
\usepackage{enumitem}

\def\smallpar{\smallbreak\@afterindentfalse\@afterheading\ignorespaces}

\newtheorem{theorem}{Theorem}[section]
\newtheorem{proposition}[theorem]{Proposition}
\newtheorem{lemma}[theorem]{Lemma}
\newtheorem{corollary}[theorem]{Corollary}
\newtheorem{definition}[theorem]{Definition}

\newenvironment{proof}{{\em Proof. }}{\hfill $\Box$ \vspace{1em}}

\usepackage{xcolor}

\newcommand{\E}{{\mathbb E}}

\newcommand{\Var}{{\mathbb V}{\rm ar}}
\newcommand{\Cov}{{\mathbb C}{\rm ov}}

\newcommand{\bb}{{\mathbf b}}
\newcommand{\xx}{\mathbf{x}}
\newcommand{\yy}{{\mathbf y}}

\newcommand{\bz}{{\mathbf 0}}

\newcommand{\btheta}{\bm{\theta}}

\newcommand{\bomega}{\bm{\omega}}
\newcommand{\bta}{\bm{\eta}}

\newcommand{\Pro}{{\mathbb P}}
\newcommand{\Qro}{\mathbb Q}

\newcommand{\bi}{\begin{itemize}}
\newcommand{\ei}{\end{itemize}}
\newcommand{\ra}{\rightarrow}

\newcommand{\ep}{\epsilon}

\newcommand{\sign}{{\operatorname{sign}}}

\newcommand{\iy}{\infty}

\newcommand{\beq}{\begin{equation}}
\newcommand{\bsq}{\end{equation}}
\newcommand{\beqa}{\begin{eqnarray*}}
\newcommand{\bsqa}{\end{eqnarray*}}
\newcommand{\btm}{\begin{theorem}}
\newcommand{\etm}{\end{theorem}}
\newcommand{\bpf}{\begin{proof}}
\newcommand{\epf}{\end{proof}}
\newcommand{\bla}{\begin{lemma}}
\newcommand{\ela}{\end{lemma}}
\newcommand{\bdn}{\begin{definition}}
\newcommand{\edn}{\end{definition}}
\newcommand{\bpn}{\begin{proposition}}
\newcommand{\epn}{\end{proposition}}
\newcommand{\bcy}{\begin{corollary}}
\newcommand{\ecy}{\end{corollary}}

\def\ldotsplus{\mathinner{\ldotp\ldotp\ldotp\ldotp}}
\def\fourdots{\relax\ifmmode\ldotsplus\else$\m@th \ldotsplus\,$\fi}

\begin{document}
\title{Annealed correlation and quenched noise sensitivity in biased deep neural networks}
\author{Johan Jonasson\thanks{Chalmers University of Technology and University of Gothenburg, S-412 96 Gothenburg, Sweden, jonasson@chalmers.se  }
\thanks{Research partly supported by WASP (Wallenberg Autonomous Systems and Software Program)}}
\maketitle
\begin{abstract}

In \cite{JSZ}, fully connected deep neural networks with layers of width $n$, i.i.d.\ standard Gaussian weights, sign activations and no bias terms were shown to be annealed and quenched noise sensitive with respect to any joint probability distribution $\Qro^{(n)}$ on the input $\bomega$ and its perturbation $\bta$ satisfying a mild positive-correlation condition and $\Qro^{(n)}(\bta \in \{\bomega,-\bomega\}) \to 0$. 
Here we add independent Gaussian biases of the natural order and show that quenched noise sensitivity is retained over a broad range of depths, indeed under the slightly weaker assumption that $\Qro^{(n)}(\bomega = \bta) \to 0$. 
Thus, subject to the positive-correlation condition, quenched noise sensitivity holds under the minimal requirement that the perturbed input is asymptotically unlikely to coincide with the original input.
At the same time, the annealed covariance remains bounded away from zero and is entirely accounted for by fluctuations in the network-dependent output means.

\end{abstract}

\section{Introduction}

This paper addresses a question left open in \cite{JSZ}. 
The driving force there was to investigate how sensitive binary deep neural net (DNN) classifiers are to input noise, i.e.\ if small changes in a typical input make the classifier change its mind. 
It is a well-known observation that neural nets used in practice are typically sensitive to such perturbations. 
For example a DNN that distinguishes cars from aeroplanes can be very certain that an image that clearly is of a car is in fact a car, 
but after well-chosen perturbations indistinguishable to the human eye, the DNN suddenly believes that the image is an aeroplane.

The simplest model studied in \cite{JSZ}, and, with some modifications, also the model considered here, is a fully connected network in which every hidden layer has width $n$ and the depth is $T_n$.
Asymptotics when $n$ grows to infinity will be in focus. 
The network weights $\Theta_{t,i,j}$ are assumed to be i.i.d. standard Gaussian. 
Unlike in \cite{JSZ}, each node will also have a bias term that is Gaussian with mean 0. 
The given weights constitute a typical initialization before training with data starts. 
The input will always be a binary string and the output activation will always be the sign function. 
In practice this activation is seldom used because it makes training hard, but could be argued to be the most natural one if such considerations were not needed.

To make the setup precise, let $\Theta = \{\Theta_1,\ldots,\Theta_{T_n}\}$ be i.i.d.\ random $n \times n$ matrices, 
where in each $\Theta_t$, the elements $\Theta_t(i,j)$ are i.i.d.\ standard Gaussian. 
Let $B=(\bb_1,\ldots,\bb_{T_n})$ be independent Gaussian $n$-vectors with mean $\bz$ and covariance $c_n n I$, where $c_n$ is assumed to satisfy $1/n \leq c_n \leq \Bar{c} < \iy$. 
Let $\bomega=\bomega_0 \in \{-1,1\}^n$ be the input. 
Then for $t=1,\ldots,T_n$, let 
\[\bomega_t = \sign(\Theta_t\bomega_{t-1} + \bb_t),\]
where the sign operator is taken pointwise.
Finally, writing $\Phi=(\Theta,B,\btheta,b)$, let the output of the DNN be 
\[f^{(n,T_n)}_\Phi(\bomega)=\sign(\btheta^\top\bomega_{T_n} + b),\]
where $\btheta \sim N(0,I_n)$ and $b \sim N(0,c_n n)$.
(i.e.\ $f_{n,T_n}$ is constructed by feeding the penultimate vector $\bomega_{T_n}$ into another independent neuron of the same kind as all the other neurons before it). 
We always assume that $n$ is odd. 
Denote the law of $f_{\Phi}^{(n,T_n)}$ by $\Pro^{(n)}_f$.

In \cite{JSZ}, it was shown that for $c_n=0$, the sequence $\left\{f_{\Phi}^{(n,T_n)}\right\}$ is noise sensitive in a very strong sense:
It is annealed noise sensitive
with respect to any joint distribution $\Qro^{(n)} = \Pro^{(n)}_{\bomega,\bta}$ on the input $\bomega$ and its perturbation $\bta$ for which these are whp not identical or antipodal: $\Qro^{(n)}(\bta \in \{\bomega,-\bomega\}) \to 0$, 
i.e.\ for such $\Qro^{(n)}$, 
\[\lim_{n \to \iy} \Cov_{(f,(\bomega,\bta)) \sim \Pro^{(n)}_f \times \Qro^{(n)}}(f(\bomega),f(\bta)) = 0.\]
If $\Qro^{(n)}$ also satisfies the positive-correlation condition 
\[\Cov_{(\bomega,\bta) \sim \Qro^{(n)}}(h(\bomega),h(\bta)) \geq 0\]
for all Boolean functions $h$,
then this entails that $\left\{f_\Phi^{(n,T_n)}\right\}$ is also quenched noise sensitive with respect to $\Qro^{(n)}$:
for any $\delta >0$, there exists $N<\iy$ such that whenever $n \geq N$, 
\[\Pro^{(n)}_f\left(\Cov_{\Qro^{(n)}}(f(\bomega),f(\bta)) > \delta\right) < \delta.\]

(This quenched noise-sensitivity result was in fact not explicitly stated in \cite{JSZ}, but is immediate from what was done there.)
The positive-correlation condition on $\Qro^{(n)}$ holds very generally, for example if $\Qro^{(n)}$ is symmetric and the (automatically reversible) Markov kernel $K$ given by
\[K(\xx,\yy) = \Qro^{(n)}(\bta=\yy|\bomega=\xx)\]
has only nonnegative eigenvalues, which immediately gives $\Cov_{\Qro^{(n)}}(h(\bomega),h(\bta)) \geq 0$.
A particular class of such distributions is given by taking $(\bomega,\bta)$ as consecutive states of any lazy reversible Markov chain on $\{-1,1\}^n$ at stationarity.
Another such class is given by taking states two steps apart for any reversible stationary Markov chain on $\{-1,1\}^n$. 

In other words, \cite{JSZ} established noise sensitivity for $c_n=0$ for a very broad class of input and perturbation distributions. 
As we shall now see, when adding bias vectors to the model as above, quenched noise sensitivity remains for $T_n$ in a window $[K_n,e^{\rho \sqrt{n}}]$ for $K_n/\log\log n \to \iy$ and some $\rho>0$. 
Indeed it does so in an even stronger form:  
We retain the positive-correlation assumption on $\Qro^{(n)}$, but, in the biased setting considered below, replace the non-antipodal condition by the weaker assumption
$\Qro^{(n)}(\bomega=\bta) \to 0$ as $n \to \iy$.
The exclusion of antipodal input pairs is no longer required because the bias destroys the absorbing antipodal state.

We write $\Pro^{(n)}_{\bomega}$ and $\Pro^{(n)}_{\bta}$ for the two marginals of $\Qro^{(n)}$.
Before stating the main result, we need the following.

\begin{lemma} \label{lemma:properties_of_g}
    Consider
    \[g_c(x) = \frac{2}{\pi}\arcsin\sqrt{\frac{x}{1+c}}, \,\, 0 \leq c \leq \Bar{c}, \,\, 0 \leq x \leq 1.\]
    Then $g_c$ has a unique fixed point $v_c$ in $(0,1)$, for which $\tfrac{4}{\pi^2(1+c)} \leq v_{c} \leq 1/2$, and for any sufficiently small $\ep$, there exists $\alpha>0$ such that $g_c(x) \geq x+\alpha$ on $[\ep,v_c-\ep]$ and $g_c(x) \leq x-\alpha$ on $[v_c+\ep,1-\ep]$ uniformly over $c \in [0,\Bar{c}]$.
\end{lemma}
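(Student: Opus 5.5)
Write $s=\sqrt{x/(1+c)}$, so $g_c(x)=\tfrac{2}{\pi}\arcsin s$. The plan is to study $h_c(x)=g_c(x)-x$ on $[0,1]$ via concavity. We have $h_c(0)=0$. Also $h_c'(x)=\tfrac{1}{\pi}\cdot\tfrac{1}{\sqrt{x(1+c-x)}}$, since $\tfrac{d}{dx}\arcsin\sqrt{x/(1+c)} = \tfrac{1}{2\sqrt{x}\sqrt{1+c-x}}$. This derivative is $+\infty$ at $0^+$, so $h_c>0$ near $0$. The function $x(1+c-x)$ is increasing on $[0,(1+c)/2]\supseteq[0,1/2]$, so $g_c'$ is decreasing there and $g_c$ is concave on $[0,1/2]$. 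On $[1/2,1]$ we have $x(1+c-x)\ge \min(\tfrac12\cdot\tfrac{1+2c}{2},\, c)$, but this lower bound degenerates as $c\to0$. I would therefore avoid using concavity there and argue directly instead.

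For existence, note that $g_c(1/2)=\tfrac2\pi\arcsin\sqrt{1/(2(1+c))}\le \tfrac2\pi\arcsin(1/\sqrt2)=1/2$. If $h_c(1/2)<0$, then since $h_c>0$ near $0$, there is a zero in $(0,1/2)$. If $c=0$, then $v_0=1/2$ exactly. For the lower bound, use $\arcsin s\ge s$: this gives $g_c(x)\ge \tfrac{2}{\pi}\sqrt{x/(1+c)}>x$ whenever $x<\tfrac{4}{\pi^2(1+c)}$, so there is no fixed point below $\tfrac{4}{\pi^2(1+c)}$. For uniqueness on $(0,1/2]$, concavity of $g_c$ with $g_c(0)=0$ makes $g_c(x)/x$ decreasing, so there is at most one fixed point there. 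For $x\in(1/2,1)$, I would show $g_c(x)<x$. At $c=0$ this says $\tfrac2\pi\arcsin\sqrt{x}<x$ for $x\in(1/2,1)$; substituting $x=\sin^2\phi$ with $\phi\in(\pi/4,\pi/2)$ reduces it to $\tfrac{2\phi}{\pi}<\sin^2\phi$. This holds because $\sin^2\phi-2\phi/\pi$ vanishes at $\pi/4$ and $\pi/2$ and is concave on $[\pi/4,\pi/2]$: its second derivative is $2\cos 2\phi\le0$ there. Since $g_c\le g_0$ (monotone in $c$), the inequality $g_c(x)<x$ follows for all $c$. This yields a unique fixed point $v_c\in[\tfrac{4}{\pi^2(1+c)},1/2]$.

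For uniformity, define $H(c,x)=g_c(x)-x$, which is jointly continuous on $[0,\bar c]\times[0,1]$. The set $\{(c,x): c\in[0,\bar c],\ \ep\le x\le v_c-\ep\}$ is compact provided $c\mapsto v_c$ is continuous. Continuity holds by uniqueness plus compactness: any limit point of $v_{c_k}$ is a fixed point lying in $[4/(\pi^2(1+\bar c)),1/2]$, hence equals $v_c$. On this set $H>0$ pointwise, since $g_c>x$ on $(0,v_c)$ by the concavity/uniqueness argument above. A continuous positive function on a compact set has a positive minimum, which gives $\alpha$. The same reasoning applies on $\{v_c+\ep\le x\le 1-\ep\}$, where $H<0$. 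Here "sufficiently small $\ep$" means small enough that these sets are nonempty, e.g.\ $\ep<\tfrac{2}{\pi^2(1+\bar c)}$.

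The main obstacle is the region $x>1/2$ for small $c$: $g_0$ has an infinite derivative at $x=1$ and $g_0(1)=1$, so the strict inequality degenerates at the endpoint. This is exactly why the statement stops at $1-\ep$. The $\sin^2\phi$ concavity argument handles the interior cleanly, and the compactness argument then supplies uniform margins.
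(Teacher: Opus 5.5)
Your proof is correct, but it organizes the sign analysis of $h_c=g_c-x$ differently from the paper. The paper computes $g_c''$ to show that $h_c$ is concave on $[0,a]$ and convex on $[a,1]$, where $a=1\wedge(1+c)/2$. It uses the reflection inequality $g_c(x)+g_c(1-x)\le 1$ (an equality at $c=0$) to get $h_c(1-\epsilon)\le -h_c(\epsilon)<0$. It then splits into cases according to the sign of $h_c(a)$ to obtain exactly one sign change in $[\epsilon,1-\epsilon]$, and only afterwards locates $v_c$ in $[4/(\pi^2(1+c)),1/2]$.

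You use concavity only on $[0,1/2]$, where the decreasing ratio $g_c(x)/x$ gives at most one fixed point. You dispose of $(1/2,1)$ all at once through the single inequality $g_0(x)<x$, proved via $x=\sin^2\phi$, together with monotonicity in $c$. This avoids the inflection-point case analysis and the reflection identity. It establishes $v_c\le 1/2$ and the sign of $h_c$ on $(v_c,1)$ in one stroke, and it reduces all $c$-dependence on the right half to the case $c=0$.

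For continuity of $c\mapsto v_c$, the paper inverts the fixed-point equation as $c=\Psi(v_c)$ with $\Psi(x)=x/\sin^2(\pi x/2)-1$ strictly decreasing. That monotonicity holds on the relevant range $(0,1/2]$ but not on all of $(0,1)$. Your subsequence argument needs only uniqueness and the a priori bracket $[4/(\pi^2(1+\bar c)),1/2]$, which keeps limit points away from the spurious fixed points $0$ and, for $c=0$, $1$. The final compactness step producing $\alpha$ is the same in both proofs.

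Two small slips, neither affecting the argument:
\begin{itemize}
\item The derivative you display is $g_c'$, not $h_c'=g_c'-1$.
\item Both the uniqueness on $(0,1/2]$ and the strict inequality on $(1/2,1)$ need strict concavity. It does hold, since $x(1+c-x)$ is strictly increasing on $[0,1/2]$ and $\cos 2\phi<0$ on $(\pi/4,\pi/2]$.
\end{itemize}
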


\begin{theorem} \label{thm:neither}
Let $\Qro^{(n)}$ and $\Phi=(\Theta,B,\btheta,b)$ be as above, and let $g_c$ be as in Lemma \ref{lemma:properties_of_g}. 
Assume that $K_n/\log \log n \to \iy$. Then there exists $\rho>0$ such that whenever
\[K_n \leq T_n \leq e^{\rho \sqrt{n}},\]
and $\Pro^{(n)}_f$ is the resulting law of $f_\Phi^{(n,T_n)}$,
the following conclusions hold.
\begin{itemize}
\item[(i)] {\bf Quenched noise sensitivity:} We have
\[\Cov_{\Qro^{(n)}}(f(\bomega),f(\bta)) \stackrel{\Pro^{(n)}_f}{\longrightarrow} 0\]
as $n \to \iy$,
in particular $\{f_\Phi^{(n,T_n)}\}$ is quenched noise sensitive with respect to $\Qro^{(n)}$.

\item[(ii)] {\bf Annealed correlation:} If $c_n = c > 0$, then, with $v_c$ being the unique fixed point of $g_c$,
\begin{align*} 
\lim_{n \to \iy}\Cov_{\Pro^{(n)}_f \times \Qro^{(n)}}(f(\bomega),f(\bta)) 
& = \lim_{n \to \iy}\Cov_{\Pro^{(n)}_f}\left(\E_{\Qro^{(n)}}\left[f(\bomega)\right],\E_{\Qro^{(n)}}\left[f(\bta)\right]\right) \\
&= 1-2v_c.
\end{align*}

\item[(iii)] {\bf Variance of the conditional output mean:} If, in addition to the conditions in (ii), $\sum_{\xx \in \{-1,1\}^n}\Pro^{(n)}_{\bomega}(\xx)^2 \to 0$ as $n \to \iy$, then
\[\lim_{n \to \iy}\Var_{\Pro^{(n)}_f}\left(\E_{\Pro^{(n)}_{\bomega}}\left[f(\bomega)\right]\right) = 1-2v_c.\]
\end{itemize}
\end{theorem}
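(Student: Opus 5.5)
We track the overlap dynamics of the network on a pair of inputs. Fix two inputs $\xx,\yy$ and condition on the layer-$(t-1)$ vectors $\xx_{t-1},\yy_{t-1}$ with normalized overlap $m_{t-1}=\frac1n\xx_{t-1}^\top\yy_{t-1}$. Given these, the pairs $\bigl((\Theta_t\xx_{t-1}+\bb_t)_i,(\Theta_t\yy_{t-1}+\bb_t)_i\bigr)$, $i=1,\dots,n$, are i.i.d.\ centred bivariate Gaussians with variances $n(1+c_n)$ and covariance $n(m_{t-1}+c_n)$. Sheppard's formula gives, for the fraction $x_t$ of disagreeing coordinates (so $m_t=1-2x_t$),
\[
\E[x_t\mid x_{t-1}]=\frac1\pi\arccos\frac{1-2x_{t-1}+c_n}{1+c_n}=\frac{2}{\pi}\arcsin\sqrt{\frac{x_{t-1}}{1+c_n}}=g_{c_n}(x_{t-1}),
\]
using $\arccos(1-2u)=2\arcsin\sqrt u$. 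So $x_t$ is a binomial$(n,g_{c_n}(x_{t-1}))/n$ Markov chain. By Lemma~\ref{lemma:properties_of_g} and Hoeffding (fluctuations $O(n^{-1/2}\sqrt{\log})$), started from any $x_0\ge 1/n$ (i.e.\ $\xx\ne\yy$), the chain leaves small values and reaches an $\ep$-neighbourhood of $v_{c_n}$ within $K_n$ steps, and then stays there up to time $e^{\rho\sqrt n}$ whp. Starting near $1$ (antipodal inputs) is harmless because $g_c(1)=\frac2\pi\arcsin(1+c)^{-1/2}<1$ when $c>0$, and $g_c(1)\le 1-\alpha$ uniformly. The main obstacle is the start from tiny $x_0$, e.g.\ $x_0=1/n$. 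Here I would use $g_c(x)\approx \frac{2}{\pi\sqrt{1+c}}\sqrt x$. This means $x_t$ grows like $x_0^{2^{-t}}$, so $O(\log\log n)$ steps suffice; this is the source of $K_n/\log\log n\to\iy$. One must also exclude extinction at $0$. When $nx$ is small, the number of disagreements $nx_t$ is roughly Poisson with mean $\asymp\sqrt{n\cdot nx_{t-1}}\ge\sqrt n$, so extinction has probability $e^{-\Omega(\sqrt n)}$. Note $c_n\ge 1/n$ only matters through $g_{c_n}$, which is covered uniformly by the lemma. Staying near $v_c$ for $e^{\rho\sqrt n}$ steps follows from a union bound over steps, with deviation $n^{-1/4}$ at each step.

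Next, conditional on the final layer, the output is a fresh neuron. Write $\mu_\Phi=\E_{\Qro}[f(\bomega)]$. Then
\[
\E_{\btheta,b}[f(\xx)f(\yy)]=1-2\cdot\tfrac1\pi\arccos\frac{m_T+c_n}{1+c_n}=1-2g_{c_n}(x_T)\approx 1-2v_{c_n},
\]
whp for each fixed pair with $\xx\ne\yy$. Averaging over $\Qro$ and using $\Qro(\bomega=\bta)\to0$ gives $\E_\Phi\E_\Qro[f(\bomega)f(\bta)]\to 1-2v_c$. Since the input has no randomness given $\Phi$, we have $\E_\Phi[\mu_\Phi]=0$ by symmetry of $\btheta,b$, so the annealed covariance equals $\E_{\Phi}\E_\Qro[f(\bomega)f(\bta)]$. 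This is the first limit in (ii).

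For the identity with $\Cov_{\Pro_f}(\E_\Qro f(\bomega),\E_\Qro f(\bta))$ in (ii), and for (i), I would use the law of total covariance:
\[
\Cov_{\Pro_f\times\Qro}=\E_\Phi[\Cov_\Qro(f(\bomega),f(\bta))]+\Cov_\Phi(\E_\Qro f(\bomega),\E_\Qro f(\bta)).
\]
The key is to show the quenched term vanishes in $L^1$; since it is nonnegative by positive correlation, it then vanishes in probability, giving (i). For this I condition on $\Phi'=(\Theta,B)$ and on the last layer, and consider the random quantity $\E_\Qro[f(\bomega)f(\bta)]-\E_\Qro f(\bomega)\E_\Qro f(\bta)$. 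The second-moment idea is to introduce an independent copy $(\bomega',\bta')\sim\Qro$, so that the product term $\E_\Qro f(\bomega)\E_\Qro f(\bta)$ becomes $\E[f(\bomega)f(\bta')]$. All cross pairs $(\bomega,\bta')$, $(\bomega,\bta)$ etc.\ end in overlap $\approx 1-2v_c$, because each pair of distinct inputs (and $\bomega=\bta'$ is unlikely, unless the marginals are concentrated, in which case that pair agrees trivially in both terms) converges to $v_c$ by the first paragraph. The difficulty is that $\E_{\btheta,b}$ of the product is then a function of the same limiting overlap, so the difference has mean $o(1)$. We still need the difference itself to be small, not just its mean. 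For that I compute the $\Phi'$-variance via four-input versions. Whp all pairwise disagreement fractions among the four final-layer vectors are $\approx v_c$, so the Gram matrix is deterministic in the limit. Then the joint law of the four values $(\btheta^\top\cdot+b)$ is asymptotically a fixed Gaussian, forcing $\E_{\btheta,b}$-averaged statistics to coincide. The last step, pinning down the conditional mean variance, is handled by decomposing $\btheta^\top\bomega_T+b$ into a common component $\sqrt{c_n n+\dots}$ and an independent part.

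For (iii), the same overlap argument applies with $\bta$ replaced by an independent copy $\bomega'\sim\Pro_{\bomega}$; the condition $\sum\Pro_\bomega(\xx)^2\to0$ gives $\Pro(\bomega=\bomega')\to0$. Then $\Var_\Phi(\E f(\bomega))=\E_\Phi\E[f(\bomega)f(\bomega')]\to1-2v_c$, again by the first paragraph.
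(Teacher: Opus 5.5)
Your skeleton matches the paper's: the binomial overlap chain, the $O(\log\log n)$ escape from small overlaps, the pairwise limit $\Cov_{\Pro^{(n)}_f}(f(\xx),f(\yy))\approx 1-2v_{c_n}$ for $\xx\neq\yy$, the total-covariance decomposition, and nonnegativity of the quenched term plus Markov. Your treatment of (iii) and of the annealed limit in (ii) is fine.

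The gap is in the step that carries (i) and the second equality in (ii), namely $\E_{\Pro^{(n)}_f}[\Cov_{\Qro^{(n)}}(f(\bomega),f(\bta))]\to0$. Write $p_n=\sum_{\xx}\Pro^{(n)}_{\bomega}(\xx)\Pro^{(n)}_{\bta}(\xx)$ for the probability that $\bomega=\bta'$. Your own computation then gives
\[
\E_{\Pro^{(n)}_f}\bigl[\E_{\Qro^{(n)}}f(\bomega)\,\E_{\Qro^{(n)}}f(\bta)\bigr]=p_n+(1-p_n)(1-2v_{c_n})+o(1),
\qquad
\E_{\Pro^{(n)}_f}\E_{\Qro^{(n)}}[f(\bomega)f(\bta)]=1-2v_{c_n}+o(1).
\]
A coincidence $\bomega=\bta'$ contributes $+1$ only to the first expression, since the pair $(\bomega,\bta)$ is distinct whp by hypothesis. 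So it does not ``agree trivially in both terms.'' The mean of the difference is $-2v_{c_n}p_n+o(1)$, which is not $o(1)$ unless $p_n\to0$, and you give no reason for that.

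The missing idea is that only a one-sided bound is needed. Every term of $\sum_{\xx,\yy}\Pro^{(n)}_{\bomega}(\xx)\Pro^{(n)}_{\bta}(\yy)\Cov_{\Pro^{(n)}_f}(f(\xx),f(\yy))$ is at least $1-(1+\varphi)2v_{c_n}$, because the diagonal terms equal $1$. Together with nonnegativity this gives $0\le\E_{\Pro^{(n)}_f}[\Cov_{\Qro^{(n)}}]\le 3\varphi$ for large $n$, which is exactly how the paper closes the argument. Equivalently, apply positive correlation to $h=\mathbf 1_{\{\xx\}}$ and sum over $\xx$: this gives $p_n\le\Qro^{(n)}(\bomega=\bta)\to0$, so concentrated marginals are in fact excluded.

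Once this is in place, your four-input second-moment computation is unnecessary. As you note yourself, a nonnegative quantity whose mean tends to $0$ tends to $0$ in probability, and (ii) follows from the same decomposition. As sketched (``forcing the averaged statistics to coincide'', the ``common component'') it is not checkable, and it would meet the same coincident-input issue.

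A smaller point concerns (i), where $c_n$ may be as small as $1/n$. Then $1-g_{c_n}(1)=\frac{2}{\pi}\arctan\sqrt{c_n}$ is of order $n^{-1/2}$. So the claim $g_c(1)\le 1-\alpha$ uniformly is false, and the lemma's $\alpha$ does not cover $[1-\ep,1]$. Near-antipodal starts must be handled like starts near $0$, via $1-g_c(1-x)\ge g_c(x)\vee\frac{2}{\pi}\arctan\sqrt{c}$. With $c_n\ge1/n$ this yields order $\sqrt n$ agreeing coordinates after one step, and this is where the hypothesis $c_n\ge 1/n$ is actually used.
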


\section{Proofs}

The following lemma is a different formulation of Lemma 3.1 in \cite{JSZ}.

\begin{lemma} \label{lemma:diff}

Let $X$ and $Y$ be independent Gaussian random variables with mean 0 and variances $a$ and $b$ respectively. Then 
\[\Pro(\sign(X-Y) \neq \sign(X+Y)) = \frac{2}{\pi} \arcsin\sqrt{\frac{b}{a+b}}.\]

\end{lemma}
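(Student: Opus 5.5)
Since $\sign(X-Y)\neq\sign(X+Y)$ exactly when $|Y|>|X|$, the plan is to compute $\Pro(|Y|>|X|)$ by passing to polar coordinates. Write $X=\sqrt{a}\,U$ and $Y=\sqrt{b}\,V$ with $U,V$ i.i.d.\ standard Gaussian. Up to the null event $\{|X|=|Y|\}$, the two signs differ precisely when $X-Y$ and $X+Y$ have opposite signs. This means $(X-Y)(X+Y)=X^2-Y^2<0$. So the probability in question equals $\Pro(bV^2>aU^2)$.

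Next I use the rotational invariance of $(U,V)$: write $(U,V)=R(\cos\phi,\sin\phi)$ with $\phi$ uniform on $[0,2\pi)$, independent of $R$. The event $bV^2>aU^2$ becomes $|\tan\phi|>\sqrt{a/b}$, or equivalently $|\sin\phi|>\sqrt{a/(a+b)}$. By symmetry it suffices to look at the first quadrant $\phi\in[0,\pi/2)$. There the condition reads $\phi>\arctan\sqrt{a/b}$, which has probability
\[
\frac{\pi/2-\arctan\sqrt{a/b}}{\pi/2} \;=\; \frac{2}{\pi}\arctan\sqrt{b/a}.
\]
The last step is the identity $\arctan\sqrt{b/a}=\arcsin\sqrt{b/(a+b)}$. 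It holds because the angle whose tangent is $\sqrt{b/a}$ sits in a right triangle with legs $\sqrt{a},\sqrt{b}$ and hypotenuse $\sqrt{a+b}$.

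The degenerate cases need a quick check. If $b=0$, both sides are $0$. If $a=0$ and $b>0$, then $\sign(-Y)\neq\sign(Y)$ almost surely, and the right-hand side is $\tfrac{2}{\pi}\arcsin 1=1$. The only slightly delicate point is the null-event bookkeeping with the convention $\sign(0)$, and it is harmless because $X\pm Y$ vanish with probability zero when $a+b>0$. So there is no real obstacle; the main thing to get right is the quadrant symmetry reduction.
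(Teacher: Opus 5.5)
Your proof is correct. The reduction of $\{\sign(X-Y)\neq\sign(X+Y)\}$ to $\{X^2<Y^2\}$ up to a null set is valid, as is the passage to the uniform angle of $(U,V)$, the quadrant symmetry, the identity $\arctan\sqrt{b/a}=\arcsin\sqrt{b/(a+b)}$, and your treatment of the degenerate cases.

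The paper does not prove this lemma. It only remarks that the statement is a reformulation of Lemma~3.1 of \cite{JSZ}. It then applies the lemma by writing $(\Theta_t)_{i,\cdot}\bomega+b_t(i)=X+Y$ and $(\Theta_t)_{i,\cdot}\bta+b_t(i)=X-Y$. Here $X$ collects the agreeing coordinates together with the bias, so it has variance $n(1-x)+c_n n$. $Y$ collects the disagreeing coordinates, so it has variance $nx$. This immediately yields $g_{c_n}(x)$, which is how the bias is absorbed.

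Your argument makes the proof self-contained and shows why only the ratio $b/(a+b)$ matters. It is the two-dimensional case of the classical fact that a standard Gaussian vector separates two fixed directions with probability equal to their angle divided by $\pi$. The directions here are $(\sqrt a,\sqrt b)$ and $(\sqrt a,-\sqrt b)$, whose angle is $2\arctan\sqrt{b/a}$. The paper's route gains only brevity, at the cost of sending the reader to \cite{JSZ}.
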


A consequence of Lemma \ref{lemma:diff} is that if $\bomega$ and $\bta$ are two bit strings of dimension $n$ that differ at $nx$ coordinates, then 
\[\Pro(\sign((\Theta_t)_{i,\cdot}\bomega+b_t(i))\ \neq \sign((\Theta_t)_{i,\cdot} \bta + b_t(i))) = \frac{2}{\pi}\arcsin{\sqrt{\frac{x}{1+c_n}}} = g_{c_n}(x)\]
for $g_c$ as in Lemma \ref{lemma:properties_of_g}, where $(\Theta_t)_{i,\cdot}$ denotes the $i$th row of $\Theta_t$.

{\em Proof of Lemma \ref{lemma:properties_of_g}:}
It is obvious that $g_c$ is nonnegative, increasing, and of order $\sqrt{x}$ as $x \ra 0$.
Also,
\[1-g_c(1-x) \geq g_c(x) \vee \frac{2}{\pi}\arctan\sqrt{c},\]
where the first inequality follows from $g_c(x)+g_c(1-x)\le 1$.
Moreover, 
\[g_c''(x) = -\frac{1+c-2x}{2\pi(x(1+c-x))^{3/2}},\]
which shows that $h_c(x):=g_c(x)-x$ is concave on $[0,a]$ and convex on $[a,1]$, where $a=1 \wedge (1+c)/2$.
Let 
\[0<\ep < \min\left\{\frac{2}{\pi^2(1+\Bar{c})},\frac{1}{4}\right\}.\]
Then
\[-h_c(1-\ep) \geq h_c(\ep) \geq \frac{2}{\pi}\sqrt{\frac{\ep}{1+c}}-\ep  
\geq \frac{2}{\pi}\sqrt{\frac{\ep}{1+\Bar{c}}}-\ep > 0.\]
We claim that $h_c$ has
a unique zero $v_{c} \in (0,1)$, which satisfies $v_c \in [\ep,1-\ep]$, $h_c>0$ on $(0,v_{c})$, and $h_c<0$ on $(v_{c},1)$.
That $v_c \in [\ep,1-\ep]$ follows from the inequalities displayed. To see the other two conclusions, use that $h_c(a) \geq 0$ or $h_c(a) <0$.
If the former holds, then concavity and $h_c(\ep)>0$ prove that $h_c$ has no zero on $[\ep,a)$, and convexity and $h_c(1-\ep)<0$ prove that
there is a unique zero in $[a,1-\ep]$.
If the latter holds, concavity and $h_c(\ep)>0$ prove a unique zero in $[\ep,a]$, and convexity and $h_c(1-\ep)<0$ prove the absence of a zero in $[a,1-\ep]$.

Knowing that $v_c$ is unique and using that $\arcsin u \geq u$ and $g_c(1/2) \leq g_0(1/2)=1/2$, it is now easy to show that $\tfrac{4}{\pi^2(1+c)} \leq v_{c} \leq 1/2$.
This also means, by the choice of $\ep$, that the two intervals $[\ep,v_c-\ep]$ and $[v_c+\ep,1-\ep]$ are nonempty.
Since $v_c$ is the unique solution of $\Psi(x)=c$ for a continuous strictly decreasing function $\Psi$, $v_c$ is continuous in $c$.
Hence the sets
\[A_-=\{(c,x): 0 \leq c \leq \Bar{c},\,\ep \leq x \leq v_c-\ep\}\]
and
\[A_+=\{(c,x): 0 \leq c \leq \Bar{c},\,v_c+\ep \leq x \leq 1-\ep\}\]
are compact.
Moreover, $h_c(x) > 0$ on $A_-$ and $h_c(x) < 0$ on $A_+$.
Hence
\[\alpha:=\min\left\{\min_{A_-}h_c(x),\min_{A_+}(-h_c(x))\right\} > 0,\]
and $g_{c}(x) \geq x+\alpha$ on $[\ep,v_{c}-\ep]$
and $g_{c}(x) \leq x-\alpha$ on $[v_{c}+\ep,1-\ep]$ uniformly over $c \in [0,\Bar{c}]$.
\hfill $\Box$

\smallskip

{\em Proof of Theorem \ref{thm:neither}}:
Let $\bomega=\bomega_0$ and $\bta = \bta_0$ be two inputs into the DNN $f_{\Phi}^{(n,T_n)}$ defined above.
Let $W_t$ be the set of coordinates where $\bomega_t$ and $\bta_t$ differ, and let $D_t = |W_t|/n$.
Conditionally on $D_{t-1} = x$, the disagreement indicators in layer $t$ are i.i.d.\ Bernoulli with parameter $g_{c_n}(x)$, so that in particular $W_t$ is invariant under permutations and $nD_t \mid D_{t-1}=x \sim Bin(n,g_{c_n}(x))$.

On $[1/n,\ep]$, a lower bound $g_{c_n}(x) \geq 2C\sqrt{x}$ applies uniformly.
By standard Chernoff bounds, there then exists $\rho>0$ such that for $x \in [1/n,\ep]$,
\[\Pro^{(n)}_f(D_{t+1} \leq C\sqrt{x} \mid D_t =x) \leq e^{-3\rho \sqrt{n}}.\]
Hence, for $n$ large and $0< D_0 < \ep$, $D_t$ exceeds $\ep$ within $O(\log \log n)$ steps with probability
at least $1-e^{-2\rho\sqrt{n}}$, by a union bound over layers.
By identical arguments, if $D_0>1-\ep$, then $D_t$ goes below $1-\ep$ within time $O(\log\log n)$ with probability at least $1-e^{-2\rho\sqrt{n}}$.

Using further Chernoff bounds, union bounds over layers, and Lemma \ref{lemma:properties_of_g}, it follows that for $t_0(n)/\log\log n \to \iy$ and $t_1(n)=e^{\rho \sqrt{n}}$ and sufficiently small $\varphi >0$,
\[\Pro^{(n)}_f\left(\exists t \in [t_0(n),t_1(n)]: D_t \not \in ((1-\varphi)v_{c_n},(1+\varphi)v_{c_n})\right) < e^{-\rho \sqrt{n}}.\]
Hence for $T_n \in [t_0(n),t_1(n)]$, 
\[\Pro^{(n)}_f \left(D_{T_n} \not \in (1 \pm \varphi)v_{c_n}\right) < e^{-\rho \sqrt{n}}.\]
From this we then get for sufficiently large $n$
\begin{align*}
    \Pro^{(n)}_f(f(\bomega) \neq f(\bta)) &= \E_{\Pro^{(n)}_f}\left[\Pro^{(n)}_f(f(\bomega) \neq f(\bta) \mid D_{T_n})\right] \\
&\in [-e^{-\rho \sqrt{n}}+g_{c_n}((1-\varphi)v_{c_n}), e^{-\rho \sqrt{n}} + g_{c_n}((1+\varphi)v_{c_n})] \\
&\in (1 \pm \varphi)v_{c_n}.
\end{align*}

By the symmetry of $\Phi$, $\E_{\Pro^{(n)}_f}[f(\bomega)] = 0$, and it follows that
\[\Cov_{\Pro^{(n)}_f}(f(\bomega),f(\bta)) \in 1- (1 \pm \varphi)2v_{c_n}.\]
Since $\Qro^{(n)}(\bomega=\bta) \rightarrow 0$, it also follows that for sufficiently large $n$,
\[(\Pro^{(n)}_f \times \Qro^{(n)})\left(f(\bomega) \neq f(\bta)\right) \in (1 \pm 2\varphi)v_{c_n}\]
and
\[\Cov_{\Pro^{(n)}_f \times \Qro^{(n)}}\left(f(\bomega),f(\bta)\right) \in 1- (1 \pm 2\varphi)2v_{c_n}.\]
Next observe that
\begin{align*} 
\Cov_{\Pro^{(n)}_f}\left(\E_{\Qro^{(n)}}\left[f(\bomega)\right],\E_{\Qro^{(n)}}\left[f(\bta)\right]\right) &= \sum_{\xx,\yy \in \{-1,1\}^n}\Pro^{(n)}_{\bomega}(\xx)\Pro^{(n)}_{\bta}(\yy)\Cov_{\Pro^{(n)}_f}(f(\xx),f(\yy)) \\
& \geq 1-(1+\varphi)2v_{c_n}.
\end{align*}
The same calculation, replacing $\Pro^{(n)}_{\bta}$ with $\Pro^{(n)}_{\bomega}$ and under the additional assumption $\sum_{\xx \in \{-1,1\}^n}\Pro^{(n)}_{\bomega}(\xx)^2 \to 0$, gives
\[\Var_{\Pro^{(n)}_f}\left(\E_{\Pro^{(n)}_{\bomega}}\left[f(\bomega)\right]\right) \in 1-(1\pm 2\varphi)2v_{c_n},\]
proving (iii).

By the positive-correlation condition on $\Qro$ and the covariance decomposition formula,
\begin{align*} 
0 & \leq \E_{\Pro^{(n)}_f}\left[\Cov_{\Qro^{(n)}}(f(\bomega),f(\bta))\right] \\
&=  \Cov_{\Pro^{(n)}_f \times \Qro^{(n)}}\left(f(\bomega),f(\bta)\right) - \Cov_{\Pro^{(n)}_f}\left(\E_{\Qro^{(n)}}\left[f(\bomega)\right],\E_{\Qro^{(n)}}\left[f(\bta)\right]\right) \\
&\leq 3\varphi.
\end{align*}
By Markov's inequality, it follows for any $\delta>0$, by taking $\varphi>0$ sufficiently small and $n$ sufficiently large,
\[\Pro^{(n)}_f\left(\Cov_{\Qro^{(n)}}(f(\bomega),f(\bta)) > \delta\right) < \delta.\]
This proves (i), and in combination with the previous lower bound on the second term, it also proves (ii). 
\hfill $\Box$

\end{document}